\numberwithin{equation}{section}
\theoremstyle{plain}
\newtheorem{theorem}{Theorem}[section]
\newtheorem{thm}[theorem]{Theorem}
\theoremstyle{definition}
\theoremstyle{remark}
\author{%
  Neta Shoham\thanks{School of Mathematical Sciences, Tel Aviv University, Tel Aviv, Israel.
  \href{mailto:shohamne@mail.tau.ac.il}{shohamne@mail.tau.ac.il}} \and
  Haim Avron\thanks{School of Mathematical Sciences, Tel Aviv University, Tel Aviv, Israel.
  \href{mailto:haimav@tauex.tau.ac.il}{haimav@tauex.tau.ac.il}}}
\date{}  
\begin{document}
\selectlanguage{english}%
\global\long\def\R{\mathbb{R}}%

\global\long\def\C{\mathbb{C}}%

\global\long\def\N{\mathbb{N}}%

\global\long\def\e{{\mathbf{e}}}%

\global\long\def\et#1{{\e(#1)}}%

\global\long\def\ef{{\mathbf{\et{\cdot}}}}%

\global\long\def\x{{\mathbf{x}}}%

\global\long\def\xt#1{{\x(#1)}}%

\global\long\def\xf{{\mathbf{\xt{\cdot}}}}%

\global\long\def\a{{\mathbf{a}}}%

\global\long\def\b{{\mathbf{b}}}%

\global\long\def\d{{\mathbf{d}}}%

\global\long\def\w{{\mathbf{w}}}%

\global\long\def\b{{\mathbf{b}}}%

\global\long\def\u{{\mathbf{u}}}%

\global\long\def\y{{\mathbf{y}}}%

\global\long\def\n{{\mathbf{n}}}%

\global\long\def\k{{\mathbf{k}}}%

\global\long\def\yt#1{{\y(#1)}}%

\global\long\def\yf{{\mathbf{\yt{\cdot}}}}%

\global\long\def\z{{\mathbf{z}}}%

\global\long\def\v{{\mathbf{v}}}%

\global\long\def\h{{\mathbf{h}}}%

\global\long\def\q{{\mathbf{q}}}%

\global\long\def\s{{\mathbf{s}}}%

\global\long\def\p{{\mathbf{p}}}%

\global\long\def\f{{\mathbf{f}}}%

\global\long\def\rb{{\mathbf{r}}}%

\global\long\def\rt#1{{\rb(#1)}}%

\global\long\def\rf{{\mathbf{\rt{\cdot}}}}%

\global\long\def\mat#1{{\ensuremath{\bm{\mathrm{#1}}}}}%

\global\long\def\vec#1{{\ensuremath{\bm{\mathrm{#1}}}}}%

\global\long\def\matN{\ensuremath{{\bm{\mathrm{N}}}}}%

\global\long\def\matX{\ensuremath{{\bm{\mathrm{X}}}}}%

\global\long\def\X{\ensuremath{{\bm{\mathrm{X}}}}}%

\global\long\def\matK{\ensuremath{{\bm{\mathrm{K}}}}}%

\global\long\def\K{\ensuremath{{\bm{\mathrm{K}}}}}%

\global\long\def\matA{\ensuremath{{\bm{\mathrm{A}}}}}%

\global\long\def\A{\ensuremath{{\bm{\mathrm{A}}}}}%

\global\long\def\matB{\ensuremath{{\bm{\mathrm{B}}}}}%

\global\long\def\B{\ensuremath{{\bm{\mathrm{B}}}}}%

\global\long\def\matC{\ensuremath{{\bm{\mathrm{C}}}}}%

\global\long\def\C{\ensuremath{{\bm{\mathrm{C}}}}}%

\global\long\def\matD{\ensuremath{{\bm{\mathrm{D}}}}}%

\global\long\def\D{\ensuremath{{\bm{\mathrm{D}}}}}%

\global\long\def\matE{\ensuremath{{\bm{\mathrm{E}}}}}%

\global\long\def\E{\ensuremath{{\bm{\mathrm{E}}}}}%

\global\long\def\matF{\ensuremath{{\bm{\mathrm{F}}}}}%

\global\long\def\F{\ensuremath{{\bm{\mathrm{F}}}}}%

\global\long\def\matH{\ensuremath{{\bm{\mathrm{H}}}}}%

\global\long\def\H{\ensuremath{{\bm{\mathrm{H}}}}}%

\global\long\def\matP{\ensuremath{{\bm{\mathrm{P}}}}}%

\global\long\def\P{\ensuremath{{\bm{\mathrm{P}}}}}%

\global\long\def\matU{\ensuremath{{\bm{\mathrm{U}}}}}%

\global\long\def\matV{\ensuremath{{\bm{\mathrm{V}}}}}%

\global\long\def\V{\ensuremath{{\bm{\mathrm{V}}}}}%

\global\long\def\matW{\ensuremath{{\bm{\mathrm{W}}}}}%

\global\long\def\matM{\ensuremath{{\bm{\mathrm{M}}}}}%

\global\long\def\M{\ensuremath{{\bm{\mathrm{M}}}}}%

\global\long\def\calA{{\cal A}}%

\global\long\def\calE{{\cal E}}%

\global\long\def\calF{{\cal F}}%

\global\long\def\calK{{\cal K}}%

\global\long\def\calH{{\cal H}}%

\global\long\def\calY{{\cal Y}}%

\global\long\def\calP{{\cal P}}%

\global\long\def\calX{{\cal X}}%

\global\long\def\calS{{\cal S}}%

\global\long\def\calT{{\cal T}}%

\global\long\def\Normal{{\cal \mathcal{N}}}%

\global\long\def\GP{{\cal \mathcal{GP}}}%

\global\long\def\matQ{{\mat Q}}%

\global\long\def\Q{{\mat Q}}%

\global\long\def\matR{\mat R}%

\global\long\def\matS{\mat S}%

\global\long\def\matY{\mat Y}%

\global\long\def\matI{\mat I}%

\global\long\def\I{\mat I}%

\global\long\def\matJ{\mat J}%

\global\long\def\matZ{\mat Z}%

\global\long\def\Z{\mat Z}%

\global\long\def\matW{{\mat W}}%

\global\long\def\W{{\mat W}}%

\global\long\def\matL{\mat L}%

\global\long\def\S#1{{\mathbb{S}_{N}[#1]}}%

\global\long\def\IS#1{{\mathbb{S}_{N}^{-1!}[#1]}}%

\global\long\def\PN{\mathbb{P}_{N}}%

\global\long\def\TNormS#1{\|#1\|_{2}^{2}}%

\global\long\def\ITNormS#1{\|#1\|_{2}^{-2}}%

\global\long\def\ONorm#1{\|#1\Vert_{1}}%

\global\long\def\TNorm#1{\|#1\|_{2}}%

\global\long\def\InfNorm#1{\|#1\|_{\infty}}%

\global\long\def\FNorm#1{\|#1\|_{F}}%

\global\long\def\FNormS#1{\|#1\|_{F}^{2}}%

\global\long\def\UNorm#1{\|#1\|_{\matU}}%

\global\long\def\UNormS#1{\|#1\|_{\matU}^{2}}%

\global\long\def\UINormS#1{\|#1\|_{\matU^{-1}}^{2}}%

\global\long\def\ANorm#1{\|#1\|_{\matA}}%

\global\long\def\BNorm#1{\|#1\|_{\mat B}}%

\global\long\def\ANormS#1{\|#1\|_{\matA}^{2}}%

\global\long\def\AINormS#1{\|#1\|_{\matA^{-1}}^{2}}%

\global\long\def\T{\textsc{T}}%

\global\long\def\conj{\textsc{*}}%

\global\long\def\pinv{\textsc{+}}%

\global\long\def\Prob{\operatorname{Pr}}%

\global\long\def\Expect{\operatorname{\mathbb{E}}}%

\global\long\def\ExpectC#1#2{{\mathbb{E}}_{#1}\left[#2\right]}%

\global\long\def\VarC#1#2{{\mathbb{\mathrm{Var}}}_{#1}\left[#2\right]}%

\global\long\def\dotprod#1#2#3{(#1,#2)_{#3}}%

\global\long\def\dotprodN#1#2{(#1,#2)_{{\cal N}}}%

\global\long\def\dotprodH#1#2{\langle#1,#2\rangle_{{\cal {\cal H}}}}%

\global\long\def\dotprodsqr#1#2#3{(#1,#2)_{#3}^{2}}%

\global\long\def\Trace#1{{\bf Tr}\left(#1\right)}%

\global\long\def\Vec#1{{\bf Vec}\left(#1\right)}%

\global\long\def\nnz#1{{\bf nnz}\left(#1\right)}%

\global\long\def\MSE#1{{\bf MSE}\left(#1\right)}%

\global\long\def\WMSE#1{{\bf WMSE}\left(#1\right)}%

\global\long\def\EWMSE#1{{\bf EWMSE}\left(#1\right)}%

\global\long\def\nicehalf{\nicefrac{1}{2}}%

\global\long\def\argmin{\operatornamewithlimits{argmin}}%

\global\long\def\argmax{\operatornamewithlimits{argmax}}%

\global\long\def\norm#1{\Vert#1\Vert}%

\global\long\def\sign{\operatorname{sign}}%

\global\long\def\proj{\operatorname{proj}}%

\global\long\def\diag{\operatorname{diag}}%

\global\long\def\VOPT{\operatorname\{VOPT\}}%

\global\long\def\dist{\operatorname{dist}}%

\global\long\def\diag{\operatorname{diag}}%

\global\long\def\supp{\operatorname{supp}}%

\global\long\def\sp{\operatorname{span}}%

\global\long\def\rank{\operatorname{rank}}%

\global\long\def\onehot{\operatorname{onehot}}%

\global\long\def\softmax{\operatorname{softmax}}%

\newcommand*\diff{\mathop{}\!\mathrm{d}} 

\global\long\def\dd{\diff}%

\global\long\def\whatlambda{\w_{\lambda}}%

\global\long\def\Plambda{\mat P_{\lambda}}%

\global\long\def\Pperplambda{\left(\mat I-\Plambda\right)}%

\global\long\def\Mlambda{\matM_{\lambda}}%

\global\long\def\Mlambdafull{\matM+\lambda\matI}%

\global\long\def\Mlambdafullinv{\left(\matM+\lambda\matI\right)^{-1}}%

\global\long\def\Mdaggerlambda{{\mathbf{\mat M_{\lambda}^{+}}}}%

\global\long\def\Xdaggerlambda{{\mathbf{\mat X_{\lambda}^{+}}}}%

\global\long\def\XT{{\mathbf{X}^{\T}}}%

\global\long\def\XXT{{\matX\mat X^{\T}}}%

\global\long\def\XTX{{\matX^{\T}\mat X}}%

\global\long\def\VT{{\mathbf{V}^{\T}}}%

\global\long\def\VVT{{\matV\mat V^{\T}}}%

\global\long\def\VTV{{\matV^{\T}\mat V}}%

\global\long\def\varphibar#1#2{{\bar{\varphi}_{#1,#2}}}%

\global\long\def\varphilambda{{\varphi_{\lambda}}}%

\title{\selectlanguage{english}%
Unbiased Stochastic Optimization for Gaussian Processes on Finite
Dimensional RKHS}
\maketitle
\begin{abstract}
Current methods for stochastic hyperparameter learning in {\em Gaussian Processes} (GPs)
 rely on approximations,
such as computing biased stochastic gradients or using inducing points
in stochastic variational inference. However, when using such methods we are not
guaranteed to converge to a stationary point of the true marginal
likelihood. In this work, we propose algorithms for exact stochastic
inference of GPs with kernels that induce a Reproducing Kernel Hilbert
Space (RKHS) of moderate finite dimension. Our
approach can also be extended to infinite dimensional  RKHSs at the cost of forgoing exactness. Both for finite and infinite dimensional RKHSs, our method achieves better
experimental results than existing methods when memory resources limit
the feasible batch size and the possible number of inducing points. 
\end{abstract}

\section{Introduction}

Gaussian Processes (GPs) provide a powerful probabilistic framework
which has been applied to a wide range of learning applications, such
as multi-task learning \citep{alvarez2012kernels,liu2018remarks},
active learning \citep{liu2018survey}, semi-supervised learning \cite{jean2018semi},
and reinforcement learning \citep{srinivas2010gaussian,shahriari2015taking}.
These successes can be attributed to the natural way in which uncertainty
is incorporated into the predictions via a Bayesian interpretation.
However, hyperparameter learning scales poorly. For a general covariance
function (kernel function in kernel method's parlance) the computational
cost grows as $O(n^{3})$ where $n$ denotes the number of samples,
and the storage resources grow as $O(n^{2})$. As a consequence,
approximations are required for any modern application that involves
GPs and big data.

It is therefore unsurprising that much research effort has
been devoted to approximate methods for learning GPs. For a comprehensive overview of
this research, we refer the reader to the work of \citet{liu2020gaussian}.
The most expensive operation in training GPs is the inversion of the
$n\times n$ kernel matrix which is required during the maximization
of the marginal likelihood and for the computation of the posterior
of the responses. As a result, much research has been devoted to providing
a cheap approximation of the kernel matrix. 

Broadly speaking approximations of GPs can be split in to {\em data dependent} and {\em data independent} methods.
Data dependent approximations usually come
with valuable probabilistic interpretations. An important work in
this line of research is the seminal work of \citet{quinonero2005unifying},
which provides a unified view on previous work by using the concept
of inducing points. Since then quite a few followup works have used
inducing points, while exploiting the tool of variational inference
as a theoretical platform \citep{titsias2009variational,nguyen2014collaborative,wilson2015kernel,zhao2016variational}.
This approach is closely related to the Nyström approximation of kernel
matrices \citep{zhao2016variational}.

In contrast, data independent methods rely on approximating
the kernel function itself. Typically, it is approximated by an inner product between low dimensional
vectors \citep{RahimiRect07,yang2015carte,shustin2021gauss}.

Approximating the inverse of the kernel matrix is enough for frequentist
kernel methods such as Kernel Ridge Regression. However, in order
to harness the full power of Bayesian kernel methods such as {\em Gaussian
Process Regression} (GPR), we need to be also able to maximize the marginal
likelihood, and for that an approximation of the inverse of the kernel
matrix is not enough, due to an additional log-determinant term. This
is especially true in cases where the covariance function depends
on a large number of parameters, e.g., evaluating the covariance function
involves a forward pass of a deep neural network \citep{wilson2016deep,calandra2016manifold,wilson2016stochastic}.
In such cases, it is important to be able to use a stochastic mini-batches
based optimization, since otherwise the cost of making a pass over
the entire dataset typically proves too expensive. Moreover, it is
well appreciated in the literature that when the model involves deep
neural networks, e.g., when the covariance function is defined by a
neural network, it is important to use stochastic gradients, since
they enable more efficient optimization and better generalization
\citep{goodfellow2016deep}.

Currently, two main approaches are used for stochastic hyperparameter learning in GPs. The first relies on the inducing‑points framework and employs {\em stochastic variational inference} (SVI)
\citep{hoffman2013stochastic,hensman2013gaussian,hoang2015unifying,wilson2016stochastic}.
The second approach is more direct: it is based on computing stochastic batches while
ignoring the fact that they provide only biased estimates of gradients.
Interestingly, a recent work shows that despite the bias, given a large enough batch size, the direct approach produces
almost optimal models (in terms of the marginal likelihood) \citep{chen2020stochastic}.

Both the SVI approach and the direct approach suffer from  several  disadvantages.
For example, consider the case in which the covariance function is
the inner product between two feature maps of a moderate dimension.
This can be either because this decomposition serves as an approximation
to another covariance function with an infinite dimensional \emph{
reproducing kernel Hilbert space} (RKHS), or because it has been defined
this way to begin with, e.g., as an inner product between features
that are created by passing the data through a neural network. Either
way, using the SVI approach amounts to imposing an additional unnecessary
approximation that comes from the need to use inducing points. As
for the second approach, its applicability is highly dependent on our ability to use large batches. This can be a serious impediment
in several cases, such as optimization on weak edge devices \citep{chen2016training}.

In this paper, we propose two stochastic optimization algorithms based
on mini-batches for maximizing the marginal likelihood of GPs (i.e., learning the hyperparameters). The
first algorithm is based on reframing the problem as a nonconvex-concave
minimax problem. We then leverage recent advancements in the theory
of solving such problems \citep{boct2020alternating,lin2020icml,luo2020neurips}
to propose a concrete algorithm. The second algorithm is based on
writing the loss function in compositional form, i.e., as the composition
of a function and the expected value of another function. We then
use the recently introduced Stochastic Compositional Gradient Descent
method~\citep{wang2017stochastic}. Our novel algorithms are applicable
for covariance functions connected with an RKHS of a moderate dimension,
and guarantee convergence of the marginal likelihood to a local minimum
for any batch size without need for further approximations. In the
infinite dimensional case (e.g., Gaussian covariance function) one can use our method on top of a low rank
approximation of the covariance function, e.g., using the random features method \cite{RahimiRect07}.
Our experiments show that not only is our method superior to
existing methods for stochastic optimization of the marginal likelihood
in the finite dimensional case when the batches have a moderated size, it is also
superior to the existing methods in the infinite dimensional case if the restriction
on the batch size is more severe, although in the infinite dimensional case the models found by our method are no longer optimal.

\paragraph{Additional Related Work}

Apart from the extensive literature on scaling GPs using approximations,
several works are focused on exact inference using sophisticated
distributed algorithms \citep{nguyen2019exact,wang2019exact}. In this
context, the biased stochastic gradient proposed by \citet{chen2020stochastic}
can be considered as an economical method for exact inference, given
that the covariance function and the system enable computation in
large enough batches. 

\section{Preliminaries}

\subsection{Notations and Basic Definitions}

For a function $f$:${\cal U}\to\R$ and $U=\left(u_{1},\dots,u_{m}\right)\in{\cal U}^{m}$,
$\f=f\left(U\right)$ is a vector in $\R^{m}$ such that $\f_{i}=f\left(u_{i}\right)$.
Similarly, for a binary function $k:{\cal U}\times{\cal U}\to\R$,
$K=k\left(U,U\right)$ is a matrix in $\R^{m\times m}$ such that
$K_{ij}=k\left(u_{i},u_{j}\right)$. Given a size $m$ vector $\b$
and ${\cal S}=\left(s_{1},\dots,s_{p}\right)\in\left\{ 1,\dots,m\right\} ^{p}$,
we use $\b_{{\cal S}}$ to denote the size $p$ vector such that the
$i$'th coordinate of $\b_{{\cal S}}$ is equal to the $s_{i}$'th coordinate
of $\b$. In a similar way if $C$ is an $m\times n$ matrix then
$C_{{\cal S}}$ is a $p\times n$ matrix such that the $i$'th row of
$C_{S}$ is equal to the $s_{i}$'th row of $C$. Finally, if ${\cal R}=\left(r_{1},\dots,r_{q}\right)\in\left\{ 1,\dots,n\right\} ^{q}$
then $C_{{\cal SR}}$ is a $p\times q$ matrix such that the $\left(i,j\right)$
coordinate of $C_{{\cal SR}}$ is equal to the $\left(s_{i},r_{j}\right)$
coordinate of $C$.

For a square matrix $A$ we use the notation $\left|A\right|$ to
denote the determinant of $A$. If $\calS$ is a finite sequence or
a set then $\left|\calS\right|$ denotes its length or size. Whenever
we use $\left\langle A,B\right\rangle $ where $A$ and $B$ are real
matrices, $\left\langle \cdot,\cdot\right\rangle $ symbolizes the
Frobenius inner product which is defined as 
\[
\left\langle A,B\right\rangle =\Trace{AB^{T}}.
\]
If $A$ is a real matrix then $\left\Vert A\right\Vert $ is the Frobenius
norm of $A$ so, 
\[
\left\Vert A\right\Vert =\sqrt{\left\langle A,A\right\rangle }.
\]
For a vector $\v$, $\left\Vert \v\right\Vert $ is the Euclidean norm of $\v\in\R^{q}$. 

For any closed convex set $\Omega\subseteq\R^{q}$ and $\v\in\R^{q}$,
$\proj_{\Omega}\left(\v\right)=\arg\min_{\v'\in \Omega}\left\Vert v'-v\right\Vert$ is the Euclidean projection of $\v$
on $\Omega$. 

\subsection{Hyperparameter Learning in Gaussian Process Regression}

Let $\calX$ be a feature space and let $k_{\alpha}:\calX\times\calX\to\R$
be a positive definite covariance function parameterized by hyperparameters
$\alpha\in\R^{m}$. For any $\alpha \in \R^m$, let $f_{\alpha}$ be a random function on $\calX$
distributed as a zero mean GP whose covariance is $k_{\alpha}$, that
is, for any $j\in\N$, $U\in\calX^{j}$: 
\[
f_{\alpha}\left(U\right)\sim\Normal\left(0,k_{\alpha}\left(U,U\right)\right).
\]
Let $X\in\calX^{n}$, $\y\in\R^{n}$ be a training set, where $n$
is the number of training samples. In {\em Gaussian Process Regression} (GPR), it is assumed that $\y$ is a sample
of the random vector $f_{\bar{\alpha}}\left(X\right)+\bar{\sigma}^{2}\mathbf{\epsilon}$
where $\mathbf{\epsilon}\sim\Normal\left(0,I_{n}\right)$, and $\bar{\alpha},\bar{\sigma}^{2}$
are the true hyperparameters of the model.

In GPR, hyperparameters are learned by solving a maximum
likelihood type II problem, i.e., maximizing the marginal likelihood.
\[
p\left(\y|X,\alpha,\sigma\right)=\Normal\left(\y|0,K\left(\alpha\right)+\sigma^{2}I\right)
\]
with respect to $\alpha$ and $\sigma$, where $K\left(\alpha\right)=k_{\alpha}\left(X,X\right)$.
The maximization of the marginal likelihood $p\left(\y|X,\alpha,\sigma\right)$
is equivalent to the minimization of 
\[
\y^{T}\left(K\left(\alpha\right)+\sigma^{2}I\right)^{-1}\y+\log\left|K\left(\alpha\right)+\sigma^{2}I\right|.
\]
See \citet{rasmussen2003gaussian} for details. 
In this work we further assume that the covariance function has the following form: 
\[
k_{\alpha}\left(x,x'\right)=\phi_{\alpha}\left(x\right)^{T}\phi_{\alpha}\left(x'\right)
\]
 for some feature map $\phi_{\alpha}:\calX\to\R^{d}$. It can be shown
that for all $\lambda>0$, $V\in\R^{n\times d}$ and $\b\in\R^{d}$
we have that 
\[
\b^{T}\left(VV^{T}+\lambda I\right)^{-1}\b=\min_{\w}\frac{1}{\lambda}\left\Vert V\w-\b\right\Vert +\left\Vert \w\right\Vert ^{2}
\]
(see Appendix \ref{sec:Missing-Proofs}). As a result, maximizing $p\left(\y|X,\alpha,\sigma\right)$
is equivalent to minimizing 
\[
l\left(\theta\right)=\frac{1}{\sigma^{2}}\left\Vert Z\left(\alpha\right)\w-\y\right\Vert ^{2}+\left\Vert \w\right\Vert ^{2}+\log\left|F\left(\theta\right)\right|+\left(n-d\right)\log\sigma^{2},
\]
where 
\begin{align*}
\theta & =\left(\w,\alpha,\sigma^{2}\right),\\
Z\left(\alpha\right) & =\left(\phi_{\alpha}\left(x_{1}\right),\dots,\phi_{\alpha}\left(x_{n}\right)\right)^{T}\in\R^{n\times d},\\
F\left(\theta\right) & =Z\left(\alpha\right)^{T}Z\left(\alpha\right)+\sigma^{2}I\in\R^{d\times d}.
\end{align*}

The goal of this work is to propose algorithms for minimizing $l\left(\theta\right)$ using a stochastic
gradient method based on mini-batches. This would have been straightforward
if we could write 
\[
\nabla_{\theta}\log\left|F\left(\theta\right)\right|=\sum_{i=1}^{n}G\left(\theta;x_{i}\right)
\]
for some function $G(\cdot;\cdot)$. However, there is no obvious
decomposition of this form.

\section{Stochastic Optimization for Gaussian Processes}

This section contains our main contribution: two novel stochastic
mini-batched based algorithms for minimizing $l(\theta)$. Before
detailing our approaches, we make few additional notations: 
\begin{align*}
g_{i}\left(\theta\right) & =\frac{1}{\sigma^{2}}\left(\phi_{\alpha}\left(x_{i}\right)\w-y_{i}\right)^{2}+\frac{1}{n}\left\Vert \w\right\Vert ^{2}+\frac{1}{n}\left(n-d\right)\log\sigma^{2}\\
g\left(\theta\right) & =\sum_{i=1}^{n}g_{i}\left(\theta\right)\\
F_{i}\left(\theta\right) & =\phi_{\alpha}\left(x_{i}\right)\phi_{\alpha}\left(x_{i}\right)^{T}+\frac{1}{n}\sigma^{2}I\\
h\left(A\right) & =\log\left|A\right|
\end{align*}
So we can write, 
\begin{align*}
F\left(\theta\right) & =\sum_{i=1}^{n}F_{i}\left(\theta\right)\\
l\left(\theta\right) & =g\left(\theta\right)+h\left(F\left(\theta\right)\right).
\end{align*}

\subsection{A Minimax Approach}\label{subsec:A-Minimax-Approach}

Our first step is to replace the problem 
\[
\min_{\theta}g\left(\theta\right)+h\left(F\left(\theta\right)\right)
\]
with the equivalent problem 
\begin{gather}
\min_{\theta,A}g\left(\theta\right)+h\left(A\right)\nonumber \\
\text{s.t. }A=F\left(\theta\right)\label{eq:constraint}
\end{gather}
The next step is to replace the last problem with a parameterized
problem, such that the hard constraint is replaced with a penalty
term, and the penalty term is driven to infinity. To do so, let us
first define the optimization problems
\[
\min_{\zeta}l_{\mu}\left(\zeta\right)
\]
where $\zeta=\left(\theta,A\right)$ and
\[
l_{\mu}\left(\zeta\right)=g\left(\theta\right)+h\left(A\right)+\mu\frac{\left\Vert A-F\left(\theta\right)\right\Vert }{\left\Vert A\right\Vert }.
\]
Now, suppose that Problem~(\ref{eq:constraint}) admits an optimal solution. 
Let the sequences $\{\mu_k\}$ and $\{\zeta_k\}$ satisfy $\mu_k \to \infty$ and, for each $k$, let $\zeta_k$ minimize $l_{\mu_k}(\zeta)$. 
If $\zeta^{\star}$ is an accumulation point of $\{\zeta_k\}$, then $\zeta^{\star}$ is an optimal solution of Problem~(\ref{eq:constraint}) \cite[Theorem 6.6]{ruszczynski2006nonlinear}.

The reason we use 
\[
\frac{\left\Vert A-F\left(\theta\right)\right\Vert }{\left\Vert A\right\Vert }
\]
 instead of just $\left\Vert A-F\left(\theta\right)\right\Vert $
is to avoid finding critical points around $F\left(\theta\right)=0$.

Unfortunately, the term $\left\Vert A-F\left(\theta\right)\right\Vert $
exhibits the same issues as $h\left(F\left(\theta\right)\right)=\log\det F(\theta)$
and does not allow straightforward unbiased mini-batch based stochastic
gradients. 

In order to handle this, we use the fact that for an element $v$
in Euclidean space we have that $\left\Vert v\right\Vert =\max_{\left\Vert u\right\Vert \le1}\left\langle u,v\right\rangle $.
Thus, minimizing $l_{\mu}\left(\zeta\right)$ is equivalent to 
\begin{gather}
\min_{\zeta}\max_{\left\Vert B\right\Vert \le1}\Psi\left(\zeta,B\right),\label{eq:min-max}
\end{gather}
where 
\[
\Psi\left(\zeta,B\right)=g\left(\theta\right)+h\left(A\right)+\mu\frac{\left\langle B,A-F\left(\theta\right)\right\rangle }{\left\Vert A\right\Vert }
\]
Now, we can write
\begin{equation}
\Psi\left(\zeta,B\right)=\sum^n_{i=1}\psi\left(\zeta,B;x_{i}\right)\label{eq:sum_psi}
\end{equation}
where 
\[
\psi\left(\zeta,B;x_{i},\mu\right)=g_{i}\left(\theta\right)+\frac{1}{n}h\left(A\right)+\mu\frac{\left\langle B,\frac{1}{n}A-F_{i}\left(\theta\right)\right\rangle }{\left\Vert A\right\Vert }
\]
Thus, Problem~(\ref{eq:min-max}) naturally admits unbiased stochastic
gradients based on mini-batches taken separately for the minimization
and the maximization parts.

For better stability, we restrict $A$ to be positive semidefinite
such that $A\succeq\sigma^{2}I$, and restrict $\sigma\ge\sigma_{\min}$, where $\sigma_min$ is a non-learned hyperparameter.
To further restrict the search space, we constrain $\zeta\preceq\zeta_{\max}$, where $\zeta_{\max}$ is another non-learned hyperparameter.
Together, the constraints on $\zeta$ can be represented  by $\zeta\in\Omega_{1}$,
where $\Omega_{1}$ is a convex set.

Let $\Omega_{2}=\left\{ B\in\R^{d\times d}\mid\left\Vert B\right\Vert \le1\right\} $.
The algorithm we propose is based on the following update rule for
solving Problem~(\ref{eq:min-max}):
\begin{align*}
1. & \ \zeta_{t+1}\ =\proj_{\Omega_{1}}\left(\ensuremath{\zeta_{k}-}a\nabla_{\zeta}\frac{n}{s}\sum_{i\in\calS_{t+1}}\psi\left(\zeta_{t},B_{t};x_{i},\mu_{t}\right)\right)\\
2. & \ B_{t+1}=\proj_{\Omega_{2}}\left(\ensuremath{B_{t}+}\ensuremath{b}\nabla_{B}\frac{n}{s}\sum_{i\in\bar{\calS}_{t+1}}\psi\left(\zeta_{t+1},B_{t};x_{i},\mu_{t}\right)\right)
\end{align*}
where $\calS_{t}$ and $\bar{\calS}_{t}$ are sets of $s$ indices randomly
chosen from $\left\{ 1,\dots,n\right\} $ independently from all previous
iterations. Recent work by \cite{boct2020alternating} shows that
in case of a minimax problem where maximization is of a concave function
over a convex constraint\footnote{The setup in \cite{boct2020alternating} is more general.},
then with a few additional mild conditions on the objective function,
an algorithm based on the above update rule converges to a stationary
point in $O\left(\epsilon^{-8}\right)$ iterations. 

\subsection{Stochastic Compositional Gradient Descent Approach}\label{subsec:Stochastic-Compositional-Gradien}

Consider a loss function $l:\Theta\to\R$ of the form $l\left(\theta\right)=v\left(u\left(\theta\right)\right)$
where $u:\Theta\to\R^{p}$ and $v:\R^{p}\to\R$ are differentiable
functions, and assume that $u\left(\theta\right)=\Expect_{\omega}\tilde{u}\left(\theta;\omega\right)$
for a differentiable function $\tilde{u}\left(\theta;\omega\right)$
that depends on a random variable $\omega$.\emph{ Stochastic Compositional
Gradient Descent} (SCGD) \citep{wang2017stochastic} is  an intuitive
algorithm that alternates between two steps: updating the solution
$\theta_{t}$ by a stochastic gradient iteration, and estimating $u\left(\theta_{t}\right)$
using an iterative weighted average of past values. More precisely,
the update rule of of SCGD is given by:
\begin{align*}
1. & \ \theta_{t+1}=\ensuremath{\theta_{t}-}a_{t}\left\langle \nabla v\left(\eta_{t}\right),\nabla\tilde{u}\left(\theta;\omega_{t}\right)\right\rangle \\
2. & \ \eta_{t+1}=\left(1-b_{t}\right)\eta_{t}+b_{t}\tilde{u}\left(\theta;\omega_{t}\right).
\end{align*}
where $\omega_{1},\omega_{2},\dots$ are samples from $\omega$ in an
i.i.d. manner, and $a_0,a_1, \dots,b_0,b_1,\dots$ degrees of freedom in the algorithm. Under few additional standard conditions on $u,v,\tilde{u}$,
\cite{wang2017stochastic} showed a convergence rate of ${\cal O}\left(\epsilon^{-4}\right)$
if we choose $a_{t}=t^{-\frac{3}{4}}$ and $b_{t}=t^{-\frac{1}{2}}$,
see \citep[Theorem 8]{wang2017stochastic}. 

In order to use SCGD for our loss, we define: 
\begin{align*}
u\left(\theta\right) & =\left(g\left(\theta\right),F\left(\theta\right)\right),\\
v\left(u_{1},u_{2}\right) & =u_{1}+h\left(u_{2}\right),\\
\tilde{u}\left(\theta;\omega\right) & =\frac{n}{\left|\calS\right|}\sum_{i\in\calS}\left(g_{i}\left(\theta\right),F_{i}\left(\theta\right)\right)
\end{align*}
where $\omega=\calS$ is a random set of indices chosen from $\left\{ 1,\dots,n\right\} .$
With that, given the fact that 
\[
\frac{\partial\log\left|A\right|}{\partial A}=A^{-1}
\]
 we can write an explicit update rule for our SCGD-based algorithm: 
\begin{align*}
1. & \ \theta_{t+1}=\ensuremath{\theta_{t}-}a_{t}\sum_{i\in\calS}\nabla\left[g_{i}\left(\theta_{t}\right)+\left\langle \tilde{F}_{t}^{-1},F_{i}\left(\theta\right)\right\rangle \right]\\
2. & \ \tilde{F}_{t+1}=\left(1-b_{t}\right)\tilde{F}_{t}+b_{t}\frac{n}{\left|\calS\right|}\sum_{i\in{\cal S}}F_{i}\left(\theta_{t}\right).
\end{align*}

\section{Comparison to Existing Methods}

In this work we suggest two novel methods for stochastic optimization
of the marginal likelihood. We recognize two main competing methods
that also optimize the marginal likelihood stochastically. The simplest
among them is what we henceforth refer to as \emph{Biased Stochastic Gradient
Decent} (BSGD) \citep{chen2020stochastic}. The idea in BSGD is to take 
the gradient of the marginal likelihood using data only from the current
batch, ignoring the fact that this produces only a biased estimate of the
full gradient. On the other hand \emph{Scalable Variational Gaussian
Process} (SVGP)  of \citet{hensman2015scalable} is a sophisticated approach  
that approximates the original inference problem using inducing points
and stochastic variational inference. 

Unlike our algorithms, how well each of the competing methods approximates the solution of the true problem
depends on memory consumption. In the case of BSGD, the bias in stochastic gradients shrinks as batch size increases. On the other
hand, inference quality  SVGP  crucially depends on
the number of inducing points which need to be processed forward and
backward by a neural network at each iteration. 

\subsection{Complexity Analysis}

Let ${\rm C}$ and ${\rm M}$ be the computational and the memory
complexity in the computation of $\frac{\partial\phi_{\alpha}\left(x\right)}{\partial\alpha}$,
and let $b$ be the batch size. For SVGP, assume that the number
of the inducing points is in the same order of $b$. 

\paragraph{Number of inducing points vs.\ batch size.}
For the sake if comparison, We choose the number of inducing points to be roughly the same as the mini-batch size because each inducing point requires a forward and backward pass through the network, storing activations and gradients just like a data sample. Thus, both sets consume memory in a similar way during each iteration, and matching their sizes keeps the per-iteration memory cost balanced and comparable.

The computational complexity of computing one optimization iteration
of both BSGD and SVGP is $O\left(b^{3}\right)+O\left(b^{2}d\right)+b{\rm C}$
while both our algorithm take $O\left(d^{3}\right)+O\left(bd^{2}\right)+b{\rm C}$
per iteration. For storage, BSGD and SVGP take $O\left(b^{2}\right)+O\left(bd\right)+b{\rm M}$
while ours takes $O\left(d^{2}\right)+O\left(bd\right)+b{\rm M}$. 

Typically ${\rm M}$ is relatively large as it is the storage used
by AD to compute the gradient of a neural network. In small devices
that means one might have to use small batches. Our approach aims
not to harm the exactness of the algorithm in this case. Naturally, to use our algorithms effectively, the feature dimension $d=\dim\!\phi_{\alpha}(x)$ must remain moderate, since memory scales as $O(d^{2})$ and computation as $O(d^{3})$.

\begin{table*}
\begin{centering}
\caption{Complexity analysis for one optimization iteration: ${\rm C}$ and
${\rm M}$ are the computational and the memory complexity in the
computation of $\frac{\partial\phi_{\alpha}\left(x\right)}{\partial\alpha}$.
$b$ is the batch size.}\label{tab:Complexity-analysis}
\par\end{centering}
\centering{}%
\begin{tabular}{ccc}
Algorithm & Computations & Storage\tabularnewline
\midrule
SVGP & $O\left(b^{3}\right)+O\left(b^{2}d\right)+b{\rm C}$ & $O\left(b^{2}\right)+O\left(bd\right)+b{\rm M}$\tabularnewline
BSGD & $O\left(b^{3}\right)+O\left(b^{2}d\right)+b{\rm C}$ & $O\left(b^{2}\right)+O\left(bd\right)+b{\rm M}$\tabularnewline
Minimax & $O\left(d^{3}\right)+O\left(bd^{2}\right)+b{\rm C}$ & $O\left(d^{2}\right)+O\left(bd\right)+b{\rm M}$\tabularnewline
CSGD & $O\left(d^{3}\right)+O\left(bd^{2}\right)+b{\rm C}$ & $O\left(d^{2}\right)+O\left(bd\right)+b{\rm M}$\tabularnewline
\end{tabular}
\end{table*}

\subsection{Relation to BSGD}\label{subsec:A-Discussion-about}

Our approach is close to the BSGD approach of \citet{chen2020stochastic}. BSGD is based on the following update formula:
\[
\theta_{t+1}=\theta_{t}-a_{t}\nabla l\left(\theta_{t};{\cal S}_{t}\right)
\]
where ${\cal S}_{t}$ are an independent random batch of indices and
\[
l\left(\theta;{\cal S}\right)=\y_{{\cal S}}^{T}\left(K_{{\cal SS}}\left(\theta\right)+\sigma^{2}I\right)^{-1}\y_{S}+\log\left|K_{{\cal SS}}\left(\theta\right)+\sigma^{2}I\right|.
\]
In our case, since $k_{\alpha}\left(x\right)=\phi_{\alpha}\left(x\right)^{T}\phi_{\alpha}\left(x\right)$,
we can write $l\left(\theta;{\cal S}\right)$ as 
\[
l\left(\theta;{\cal S}\right)=\sum_{i}g_{i}\left(\theta\right)+h\left(\sum_{i}F_{i}\left(\theta\right)\right).
\]
This leads to the update formula 
\[
\theta_{t+1}=\ensuremath{\theta_{t}-}a_{t}\sum_{i\in\calS}\nabla\left[g_{i}\left(\theta_{t}\right)+\left\langle \tilde{F}_{t}^{-1},F_{i}\left(\theta\right)\right\rangle \right],\,\,\text{where }\tilde{F}_{t}=\sum_{i\in\calS}F_{i}\left(\theta_{t}\right)
\]
Writing the update formula of the BSGD algorithm this way emphasizes
the fact that the only difference between the BSGD and the SCGD algorithms
lays in the way in which $F\left(\theta_{t}\right)$ is approximated: using  $\tilde{F}_{t}=\sum_{i\in\calS}F_{i}\left(\theta_{t}\right)$
in BSGD, and $\tilde{F}_{t+1}=\left(1-b_{t}\right)\tilde{F}_{t}+b_{t}\frac{n}{\left|\calS\right|}\sum_{i\in{\cal S}}F_{i}\left(\theta_{t}\right)$
in SCGD. Intuitively, the exponential smoothing that occurs SCGD should
provide additional numerical robustness beyond the theoretical advantage
of converging to a stationary point without an error which depends
on the batch size. 

We can also see that in BSGD the scale of $\tilde{F}_{t}$
is incorrect because it misses the multiplication of $\sum_{i\in{\cal S}}F_{i}\left(\theta_{t}\right)$
by $\frac{n}{\left|\calS\right|}$.

\section{Experimental Result}

\begin{table*}
\caption{Negative log marginal likelihood with neural network and Linear kernel.}\label{tab:Negative-log-marginal-linear}

\begin{tabular}{llcccc} \toprule  & method & \textbf{MINIMAX(Ours)} & \textbf{SCGD(Ours)} & SVGP & BSGD \\ batch size & name &  &  &  &  \\ \midrule \multirow[t]{9}{*}{32} & bike & \textbf{-1.482$\pm$0.249} & -1.454$\pm$0.264 & -0.895$\pm$0.296 & -1.167$\pm$0.701 \\  & elevators & \textbf{-1.588$\pm$0.287} & -1.530$\pm$0.195 & -0.150$\pm$0.436 & -1.125$\pm$0.017 \\  & keggdirected & \textbf{-0.745$\pm$0.076} & -0.740$\pm$0.093 & -0.151$\pm$1.057 & -0.688$\pm$0.018 \\  & keggundirected & -0.737$\pm$0.004 & \textbf{-0.739$\pm$0.011} & -0.655$\pm$0.007 & -0.716$\pm$0.025 \\  & kin40k & -1.713$\pm$0.063 & \textbf{-1.760$\pm$0.090} & 0.792$\pm$0.881 & -0.684$\pm$0.028 \\  & pol & 2.128$\pm$0.097 & 2.016$\pm$0.126 & \textbf{1.643$\pm$0.108} & 2.689$\pm$0.175 \\  & protein & \textbf{0.392$\pm$0.060} & 0.393$\pm$0.053 & 0.848$\pm$0.218 & 0.640$\pm$0.016 \\  & slice & 0.436$\pm$0.836 & 0.500$\pm$0.855 & \textbf{-0.045$\pm$0.127} & 0.459$\pm$0.089 \\  & tamielectric & \textbf{0.178$\pm$0.001} & 0.179$\pm$0.001 & 0.179$\pm$0.001 & 0.179$\pm$0.001 \\ \cline{1-6} \multirow[t]{9}{*}{64} & bike & \textbf{-1.378$\pm$0.275} & -1.366$\pm$0.259 & -0.644$\pm$0.297 & -1.282$\pm$0.617 \\  & elevators & \textbf{-1.523$\pm$0.276} & -1.427$\pm$0.173 & -0.152$\pm$0.440 & -1.139$\pm$0.019 \\  & keggdirected & \textbf{-0.780$\pm$0.059} & -0.748$\pm$0.105 & -0.126$\pm$1.042 & -0.726$\pm$0.020 \\  & keggundirected & \textbf{-0.747$\pm$0.004} & \textbf{-0.747$\pm$0.005} & -0.670$\pm$0.005 & -0.710$\pm$0.015 \\  & kin40k & -1.654$\pm$0.045 & \textbf{-1.682$\pm$0.110} & 0.298$\pm$0.970 & -0.952$\pm$0.059 \\  & pol & 2.206$\pm$0.299 & 2.171$\pm$0.250 & \textbf{1.610$\pm$0.128} & 2.413$\pm$0.069 \\  & protein & \textbf{0.325$\pm$0.013} & 0.381$\pm$0.060 & 0.659$\pm$0.104 & 0.618$\pm$0.021 \\  & slice & 0.471$\pm$1.059 & 0.086$\pm$0.110 & \textbf{-0.162$\pm$0.164} & 0.304$\pm$0.143 \\  & tamielectric & \textbf{0.178$\pm$0.001} & \textbf{0.178$\pm$0.001} & 0.179$\pm$0.001 & 0.179$\pm$0.001 \\ \cline{1-6} \multirow[t]{9}{*}{128} & bike & -1.340$\pm$0.039 & \textbf{-1.358$\pm$0.029} & -0.832$\pm$0.231 & -1.311$\pm$0.126 \\  & elevators & \textbf{-1.678$\pm$0.018} & -1.399$\pm$0.022 & 0.044$\pm$0.003 & -1.173$\pm$0.005 \\  & keggdirected & \textbf{-0.805$\pm$0.042} & -0.778$\pm$0.091 & -0.189$\pm$1.078 & -0.763$\pm$0.009 \\  & keggundirected & -0.725$\pm$0.022 & \textbf{-0.750$\pm$0.007} & -0.679$\pm$0.007 & -0.735$\pm$0.013 \\  & kin40k & -1.590$\pm$0.024 & \textbf{-1.618$\pm$0.030} & 0.003$\pm$0.536 & -1.224$\pm$0.038 \\  & pol & 2.215$\pm$0.156 & 2.150$\pm$0.140 & \textbf{1.583$\pm$0.014} & 2.223$\pm$0.037 \\  & protein & \textbf{0.289$\pm$0.008} & 0.327$\pm$0.010 & 0.709$\pm$0.069 & 0.576$\pm$0.003 \\  & slice & 0.019$\pm$0.043 & 0.044$\pm$0.068 & \textbf{-0.197$\pm$0.024} & -0.038$\pm$0.119 \\  & tamielectric & \textbf{0.178$\pm$0.001} & \textbf{0.178$\pm$0.001} & 0.179$\pm$0.001 & 0.179$\pm$0.001 \\ \cline{1-6} \bottomrule \end{tabular} 
\end{table*}

\begin{table*}
\begin{centering}
\caption{Negative log marginal likelihood with neural network and Gaussian
kernel.}\label{tab:Negative-log-marginal-gaussian}
\par\end{centering}
\centering{}\begin{tabular}{llcccc} \toprule  & method & \textbf{MINIMAX(Ours)} & \textbf{SCGD(Ours)} & SVGP & BSGD \\ batch size & name &  &  &  &  \\ \midrule \multirow[t]{9}{*}{16} & bike & -1.013$\pm$0.082 & -1.005$\pm$0.040 & \textbf{-1.038$\pm$0.029} & 0.105$\pm$0.107 \\  & elevators & -1.094$\pm$0.018 & \textbf{-1.135$\pm$0.009} & -0.915$\pm$0.012 & -1.081$\pm$0.014 \\  & keggdirected & \textbf{-0.755$\pm$0.004} & -0.754$\pm$0.007 & -0.640$\pm$0.012 & -0.566$\pm$0.012 \\  & keggundirected & \textbf{-0.714$\pm$0.000} & \textbf{-0.714$\pm$0.000} & -0.641$\pm$0.000 & -0.615$\pm$0.000 \\  & kin40k & -1.559$\pm$0.015 & \textbf{-1.606$\pm$0.031} & -0.697$\pm$0.027 & -0.515$\pm$0.042 \\  & pol & 2.066$\pm$0.029 & 1.980$\pm$0.027 & \textbf{1.892$\pm$0.035} & 2.656$\pm$0.192 \\  & protein & 0.485$\pm$0.007 & \textbf{0.476$\pm$0.007} & 0.625$\pm$0.016 & 0.716$\pm$0.010 \\  & slice & 1.548$\pm$0.058 & 1.556$\pm$0.042 & \textbf{0.369$\pm$0.062} & 0.461$\pm$0.143 \\  & tamielectric & \textbf{0.179$\pm$0.001} & \textbf{0.179$\pm$0.001} & \textbf{0.179$\pm$0.001} & \textbf{0.179$\pm$0.001} \\ \cline{1-6} \multirow[t]{9}{*}{32} & bike & -0.929$\pm$0.023 & -0.968$\pm$0.065 & \textbf{-1.098$\pm$0.030} & 0.083$\pm$0.055 \\  & elevators & -1.042$\pm$0.023 & \textbf{-1.111$\pm$0.016} & -0.926$\pm$0.007 & -1.092$\pm$0.011 \\  & keggdirected & -0.764$\pm$0.007 & \textbf{-0.767$\pm$0.007} & -0.661$\pm$0.007 & -0.650$\pm$0.012 \\  & keggundirected & -0.735$\pm$0.000 & \textbf{-0.737$\pm$0.000} & -0.656$\pm$0.000 & -0.714$\pm$0.000 \\  & kin40k & \textbf{-1.585$\pm$0.018} & -1.571$\pm$0.017 & -0.796$\pm$0.031 & -0.695$\pm$0.021 \\  & pol & 2.115$\pm$0.027 & 2.112$\pm$0.029 & \textbf{1.807$\pm$0.023} & 2.521$\pm$0.192 \\  & protein & \textbf{0.479$\pm$0.008} & \textbf{0.479$\pm$0.004} & 0.595$\pm$0.006 & 0.674$\pm$0.009 \\  & slice & 1.950$\pm$0.120 & 1.815$\pm$0.142 & \textbf{0.159$\pm$0.042} & 0.248$\pm$0.098 \\  & tamielectric & \textbf{0.179$\pm$0.001} & \textbf{0.179$\pm$0.001} & \textbf{0.179$\pm$0.001} & \textbf{0.179$\pm$0.001} \\ \cline{1-6} \multirow[t]{9}{*}{64} & bike & -0.912$\pm$0.035 & -0.946$\pm$0.049 & \textbf{-1.153$\pm$0.011} & -0.114$\pm$0.060 \\  & elevators & -0.988$\pm$0.013 & -1.056$\pm$0.018 & -0.932$\pm$0.004 & \textbf{-1.111$\pm$0.007} \\  & keggdirected & -0.762$\pm$0.006 & \textbf{-0.771$\pm$0.009} & -0.672$\pm$0.005 & -0.694$\pm$0.008 \\  & keggundirected & -0.739$\pm$0.000 & \textbf{-0.743$\pm$0.000} & -0.666$\pm$0.000 & -0.691$\pm$0.000 \\  & kin40k & \textbf{-1.547$\pm$0.022} & -1.539$\pm$0.039 & -0.864$\pm$0.026 & -0.947$\pm$0.040 \\  & pol & 2.433$\pm$0.081 & 2.399$\pm$0.059 & \textbf{1.754$\pm$0.023} & 2.237$\pm$0.030 \\  & protein & 0.479$\pm$0.007 & \textbf{0.474$\pm$0.007} & 0.585$\pm$0.011 & 0.644$\pm$0.011 \\  & slice & 1.964$\pm$0.846 & 1.899$\pm$0.764 & 0.092$\pm$0.036 & \textbf{-0.018$\pm$0.059} \\  & tamielectric & \textbf{0.179$\pm$0.001} & \textbf{0.179$\pm$0.001} & \textbf{0.179$\pm$0.001} & \textbf{0.179$\pm$0.001} \\ \cline{1-6} \bottomrule \end{tabular} 
\end{table*}

In our experiments we consider a covariance function of the form 
\[
k_{\alpha}\left(x,x'\right)=k'_{u}\left(g_{w}\left(x\right),g_{w}\left(x'\right)\right),
\]
where $g_{w}$ is a neural network comprised of two fully connected
layers, both with output dimension of 128 and ReLU activation function,
and $k'_{u}$ which is either the linear kernel 
\[
k'_{u}\left(z,z'\right)=\left\langle z,z'\right\rangle 
\]
or the Gaussian kernel parametrized with two hyper parameter, length
scale $u_{1}$ and magnitude $u_{2}$, that is 
\[
k'_{u}\left(z,z'\right)=u_{2}e^{-\frac{\left\Vert z-z'\right\Vert ^{2}}{2u_{1}^{2}}}
\]

\paragraph*{Setup:}

The experiment is designed such that we can see the influence of the
batch size on the result. We used nine regression datasets from the UCI
repository \citep{asuncion2007uci}, all with number of samples above
14,000 and less the 60,000. We tested the algorithms using different
batch sizes: for the linear kernel we examined batch sizes of 32,
64, 128, 256 and 512, and for the Gaussian kernel we examined batch
sizes of 16, 32, 64, 128 and 256. For SVGP, we set the number of
inducing point equal to batch size. For our algorithms, labeled MINIMAX
(Subsection \ref{subsec:A-Minimax-Approach}) and  SCGD (Subsection
\ref{subsec:Stochastic-Compositional-Gradien}), we approximated
the Gaussian kernel $k'_{u}$ using the \emph{Random Fourier Features}
method with random features $\varphi$ of dimension 1000 such that
\[
k'_{u}\left(z,z'\right)\approx\left\langle \varphi\left(z\right),\varphi\left(z'\right)\right\rangle .
\]

We ran each test on five different splits of 90\% train 10\% test.
We used AdaDelta for all methods and for each combination of dataset,
split and method, and used grid search in order to select the learning
rate that achieves minimal marginal likelihood. For  SCGD,
we fixed $b_{t}=0.9$. Note that this is very close to $1$ which
means that SCGD becomes quite similar to BSGD and much of the improvement
comes just from the correct scaling of $\tilde{F}_{t}$ . For MINIMAX
we fixed $\mu_{t}=1.0$, since we found it enough for achieving good results
despite that in theory it should be increased in an outer loop. We
ran each algorithm for 100 epochs and used the hyperparameters from
the iteration in which the marginal likelihood achieved its minimal
value. 

\paragraph*{Result with linear kernel - the exact case:}

It seems that the fact that our algorithm does an exact stochastic
optimization brings a significant improvement over existing methods
in the optimization of the marginal likelihood. As expected, we can
see that this advantage is more significant when the batch size is
smaller (see Table \ref{tab:Negative-log-marginal-linear}). However,
lower negative log marginal likelihood does not always translate to
lower MSE on the test set as we can see in Table \ref{tab:RMSE-with-neural-linear}.
We saw that sometimes by using a suboptimal learning rate that does
not achieve the minimal loss our algorithms can achieve better results
in terms of test RMSE. However, since our work is focused on optimization,
we do not use procedures such as early stopping or cross validation which
could potentially improve the result from the test RMSE perspective. 

\paragraph*{Result with Gaussian kernel:}

Since here for our algorithms, MINIMAX and SCGD, we use an
approximated Gaussian kernel based random Fourier features, we are
no longer performing an exact optimization in this case. However,
we can see in Table \ref{tab:Negative-log-marginal-linear} that  
although we use an approximated kernel eventually when the restrictions
on batch size are high our methods do a better job than the existing  
methods in the optimization of the marginal likelihood. The advantage
of our methods in the optimization is also reflected in the test error
(see Table \ref{tab:RMSE-with-neural-gaussian}).

We see that in both cases, the finite dimensional RKHS and the infinite
dimensional RKHS, unlike the existing inference algorithms, there is
no degradation in the result of our algorithms with the decreasing
of the batch size. This property can be vital for inference on weak
edge devices where the memory restrictions limit the possible batch
size. 

\begin{table*}
\caption{RMSE with neural network and Linear kernel.}\label{tab:RMSE-with-neural-linear}

\begin{tabular}{llcccc} \toprule  & method & \textbf{MINIMAX(Ours)} & \textbf{SCGD(Ours)} & SVGP & BSGD \\ batch size & name &  &  &  &  \\ \midrule \multirow[t]{9}{*}{32} & bike & 0.099$\pm$0.011 & 0.095$\pm$0.007 & 0.093$\pm$0.055 & \textbf{0.082$\pm$0.073} \\  & elevators & 0.111$\pm$0.008 & 0.158$\pm$0.114 & 0.217$\pm$0.071 & \textbf{0.090$\pm$0.003} \\  & keggdirected & \textbf{0.120$\pm$0.011} & 0.123$\pm$0.015 & 0.376$\pm$0.557 & 0.122$\pm$0.009 \\  & keggundirected & 0.120$\pm$0.006 & 0.121$\pm$0.008 & \textbf{0.119$\pm$0.003} & \textbf{0.119$\pm$0.003} \\  & kin40k & 0.061$\pm$0.012 & \textbf{0.056$\pm$0.011} & 0.651$\pm$0.310 & 0.126$\pm$0.010 \\  & pol & \textbf{2.280$\pm$0.130} & 2.328$\pm$0.123 & 2.593$\pm$0.174 & 3.402$\pm$0.152 \\  & protein & \textbf{0.426$\pm$0.024} & 0.428$\pm$0.023 & 0.550$\pm$0.079 & 0.483$\pm$0.006 \\  & slice & 0.642$\pm$0.327 & 0.697$\pm$0.282 & \textbf{0.591$\pm$0.119} & 0.992$\pm$0.339 \\  & tamielectric & 0.290$\pm$0.002 & \textbf{0.289$\pm$0.002} & \textbf{0.289$\pm$0.002} & \textbf{0.289$\pm$0.002} \\ \cline{1-6} \multirow[t]{9}{*}{64} & bike & 0.112$\pm$0.012 & 0.108$\pm$0.012 & 0.127$\pm$0.054 & \textbf{0.078$\pm$0.064} \\  & elevators & 0.111$\pm$0.013 & 0.114$\pm$0.023 & 0.216$\pm$0.071 & \textbf{0.094$\pm$0.010} \\  & keggdirected & \textbf{0.118$\pm$0.008} & 0.120$\pm$0.011 & 0.377$\pm$0.557 & 0.119$\pm$0.009 \\  & keggundirected & \textbf{0.119$\pm$0.006} & 0.121$\pm$0.008 & 0.121$\pm$0.008 & 0.123$\pm$0.006 \\  & kin40k & 0.065$\pm$0.015 & \textbf{0.060$\pm$0.010} & 0.443$\pm$0.338 & 0.097$\pm$0.009 \\  & pol & 2.449$\pm$0.160 & \textbf{2.263$\pm$0.151} & 2.415$\pm$0.115 & 3.067$\pm$0.145 \\  & protein & 0.430$\pm$0.007 & \textbf{0.426$\pm$0.021} & 0.494$\pm$0.046 & 0.473$\pm$0.003 \\  & slice & \textbf{0.479$\pm$0.066} & 0.615$\pm$0.288 & 0.554$\pm$0.160 & 0.680$\pm$0.173 \\  & tamielectric & 0.290$\pm$0.002 & \textbf{0.289$\pm$0.002} & \textbf{0.289$\pm$0.002} & \textbf{0.289$\pm$0.002} \\ \cline{1-6} \multirow[t]{9}{*}{128} & bike & 0.125$\pm$0.005 & 0.123$\pm$0.011 & 0.111$\pm$0.024 & \textbf{0.056$\pm$0.004} \\  & elevators & 0.110$\pm$0.004 & 0.101$\pm$0.002 & 0.250$\pm$0.007 & \textbf{0.091$\pm$0.003} \\  & keggdirected & \textbf{0.116$\pm$0.008} & 0.118$\pm$0.011 & 0.370$\pm$0.561 & 0.123$\pm$0.018 \\  & keggundirected & 0.124$\pm$0.008 & 0.121$\pm$0.007 & \textbf{0.118$\pm$0.006} & 0.119$\pm$0.007 \\  & kin40k & 0.073$\pm$0.002 & \textbf{0.071$\pm$0.003} & 0.283$\pm$0.193 & 0.078$\pm$0.003 \\  & pol & 2.469$\pm$0.174 & \textbf{2.413$\pm$0.234} & 2.537$\pm$0.094 & 2.642$\pm$0.110 \\  & protein & \textbf{0.435$\pm$0.009} & 0.436$\pm$0.009 & 0.513$\pm$0.023 & 0.461$\pm$0.006 \\  & slice & \textbf{0.528$\pm$0.086} & 0.546$\pm$0.181 & 0.542$\pm$0.111 & 0.583$\pm$0.086 \\  & tamielectric & 0.290$\pm$0.002 & \textbf{0.289$\pm$0.002} & \textbf{0.289$\pm$0.002} & \textbf{0.289$\pm$0.002} \\ \cline{1-6} \bottomrule \end{tabular} 
\end{table*}

\begin{table*}
\caption{RMSE with neural network and Gaussian kernel.}\label{tab:RMSE-with-neural-gaussian}

\begin{tabular}{llcccc} \toprule  & method & \textbf{MINIMAX(Ours)} & \textbf{SCGD(Ours)} & SVGP & BSGD \\ batch size & name &  &  &  &  \\ \midrule \multirow[t]{9}{*}{16} & bike & 0.081$\pm$0.012 & 0.096$\pm$0.022 & \textbf{0.046$\pm$0.007} & 0.220$\pm$0.014 \\  & elevators & 0.096$\pm$0.003 & 0.098$\pm$0.004 & \textbf{0.089$\pm$0.002} & \textbf{0.089$\pm$0.003} \\  & keggdirected & \textbf{0.118$\pm$0.008} & \textbf{0.118$\pm$0.007} & 0.122$\pm$0.005 & 0.127$\pm$0.005 \\  & keggundirected & 0.131$\pm$0.000 & 0.128$\pm$0.000 & \textbf{0.116$\pm$0.000} & 0.138$\pm$0.000 \\  & kin40k & \textbf{0.039$\pm$0.002} & 0.040$\pm$0.002 & 0.110$\pm$0.007 & 0.147$\pm$0.007 \\  & pol & 2.212$\pm$0.146 & \textbf{2.200$\pm$0.104} & 2.674$\pm$0.081 & 3.774$\pm$0.440 \\  & protein & \textbf{0.387$\pm$0.009} & 0.388$\pm$0.009 & 0.471$\pm$0.008 & 0.507$\pm$0.012 \\  & slice & 0.519$\pm$0.149 & \textbf{0.496$\pm$0.110} & 0.562$\pm$0.067 & 0.661$\pm$0.090 \\  & tamielectric & \textbf{0.289$\pm$0.002} & \textbf{0.289$\pm$0.002} & \textbf{0.289$\pm$0.002} & \textbf{0.289$\pm$0.002} \\ \cline{1-6} \multirow[t]{9}{*}{32} & bike & 0.093$\pm$0.010 & 0.095$\pm$0.012 & \textbf{0.047$\pm$0.010} & 0.235$\pm$0.024 \\  & elevators & 0.100$\pm$0.004 & 0.098$\pm$0.004 & \textbf{0.088$\pm$0.003} & 0.089$\pm$0.002 \\  & keggdirected & \textbf{0.118$\pm$0.008} & \textbf{0.118$\pm$0.007} & 0.121$\pm$0.007 & 0.125$\pm$0.008 \\  & keggundirected & 0.130$\pm$0.000 & 0.128$\pm$0.000 & \textbf{0.115$\pm$0.000} & 0.116$\pm$0.000 \\  & kin40k & \textbf{0.039$\pm$0.002} & \textbf{0.039$\pm$0.001} & 0.096$\pm$0.005 & 0.124$\pm$0.003 \\  & pol & 2.210$\pm$0.133 & \textbf{2.209$\pm$0.080} & 2.552$\pm$0.117 & 3.338$\pm$0.151 \\  & protein & \textbf{0.387$\pm$0.008} & 0.389$\pm$0.010 & 0.461$\pm$0.009 & 0.493$\pm$0.007 \\  & slice & \textbf{0.511$\pm$0.039} & 0.515$\pm$0.065 & 0.527$\pm$0.139 & 0.565$\pm$0.102 \\  & tamielectric & \textbf{0.289$\pm$0.002} & \textbf{0.289$\pm$0.002} & \textbf{0.289$\pm$0.002} & \textbf{0.289$\pm$0.001} \\ \cline{1-6} \multirow[t]{9}{*}{64} & bike & 0.101$\pm$0.008 & 0.106$\pm$0.014 & \textbf{0.035$\pm$0.001} & 0.209$\pm$0.009 \\  & elevators & 0.089$\pm$0.003 & 0.098$\pm$0.007 & \textbf{0.088$\pm$0.003} & \textbf{0.088$\pm$0.003} \\  & keggdirected & 0.118$\pm$0.008 & \textbf{0.117$\pm$0.008} & 0.119$\pm$0.006 & 0.122$\pm$0.007 \\  & keggundirected & 0.128$\pm$0.000 & 0.127$\pm$0.000 & \textbf{0.113$\pm$0.000} & 0.117$\pm$0.000 \\  & kin40k & \textbf{0.041$\pm$0.001} & \textbf{0.041$\pm$0.002} & 0.088$\pm$0.005 & 0.096$\pm$0.003 \\  & pol & \textbf{2.348$\pm$0.077} & 2.353$\pm$0.088 & 2.512$\pm$0.081 & 3.020$\pm$0.071 \\  & protein & \textbf{0.388$\pm$0.008} & 0.389$\pm$0.005 & 0.456$\pm$0.005 & 0.481$\pm$0.004 \\  & slice & 0.616$\pm$0.163 & 0.546$\pm$0.129 & \textbf{0.507$\pm$0.116} & 0.512$\pm$0.120 \\  & tamielectric & \textbf{0.289$\pm$0.002} & \textbf{0.289$\pm$0.002} & \textbf{0.289$\pm$0.002} & \textbf{0.289$\pm$0.002} \\ \cline{1-6} \bottomrule \end{tabular}
\end{table*}

\section{Conclusion}

In many cases the covariance function of a GP is defined as an inner
product between features of a finite and moderate dimension. In this
case, the problem of minimizing the negative-log-marginal-likelihood
has a shape of a standard ridge regression problem with a non standard
regularization term in the form of the log-determinant of the covariance
matrix of the representations plus $\sigma^{2}I$. In this work we
developed two techniques that enable solving this problem with stochastic
mini-batches which unlike the exiting methods does not depend on large
batches in order to be exact. When the inference involves forward
and backward passes of a feed forward neural net this property is
of great importance and can be an enabler of such inference architectures
on weak edge devices.

\paragraph{Limitations:} We remark that in comparison to BSGD, both MINIMAX and SCGD are more complex and
require tuning of additional hyperparameters in order to achieve
the minimal negative log marginal likelihood. In addition, the optimization advantage is not fully reflected in the test error, so cross-validation is still needed to select the best algorithm.
\clearpage

\bibliographystyle{tmlr}

\bibliography{bibtex}

\clearpage

\appendix

\section{Complexity Analysis}

We analyze the computational complexity for each one of the algorithms.
We ignore the complexity of for the computation of $\phi_{\alpha}\left(x_{i}\right)$
and $\frac{\phi_{\alpha}\left(x_{i}\right)}{\partial\alpha}$ as they
are the same for all the algorithms. We let $d$ be the dimension
of $\phi_{\alpha}\left(x_{i}\right)$ and $b$ be the batch size.

\subsection{Minimax}

\subsubsection*{Computational Complexity Forward}

\begin{tabular}{>{\centering}p{0.45\textwidth}>{\centering}p{0.5\textwidth}}
\textbf{Component} & \textbf{Complexity}\tabularnewline
\midrule
$g_{i}$ & $O\left(d\right)$\tabularnewline
$F_{\text{i}}$ & $O\left(d^{2}\right)$\tabularnewline
$h\left(A\right)$ & $O\left(d^{3}\right)$\tabularnewline
$\frac{\left\langle B,\frac{1}{n}A-F_{i}\left(\theta\right)\right\rangle }{\left\Vert A\right\Vert }$ & $O\left(d^{2}\right)$\tabularnewline
\midrule 
\textbf{Total} & $O\left(d^{3}\right)+O\left(bd^{2}\right)$\tabularnewline
\end{tabular}

\subsubsection*{Computational Complexity Backward}

\begin{tabular}{>{\centering}p{0.45\textwidth}>{\centering}p{0.5\textwidth}}
\textbf{Component} & \textbf{Complexity}\tabularnewline
\midrule 
$\frac{\partial g_{i}}{\partial w}$ & $O\left(d\right)$\tabularnewline
$\frac{\partial g_{i}}{\partial\phi_{i}}$ & $O\left(d\right)$\tabularnewline
$\frac{\partial h}{A}=A^{-1}$ & $O\left(d^{3}\right)$\tabularnewline
$\frac{\partial}{\partial B}$$\frac{\left\langle B,\frac{1}{n}A\right\rangle }{\left\Vert A\right\Vert }=\frac{\frac{1}{n}A}{\left\Vert A\right\Vert }$ & $O\left(d^{2}\right)$\tabularnewline
$\frac{\partial}{\partial A}$$\frac{\left\langle B,\frac{1}{n}A\right\rangle }{\left\Vert A\right\Vert }=\frac{\frac{1}{n}B\left\Vert A\right\Vert -\left\langle B,\frac{1}{n}A\right\rangle \frac{A}{\norm A}}{\left\Vert A\right\Vert ^{2}}$ & $O\left(d^{2}\right)$\tabularnewline
$\frac{\partial}{\partial\phi_{i}}$$\frac{\left\langle B,F_{i}\right\rangle }{\left\Vert A\right\Vert }=\frac{2B\phi_{i}}{\left\Vert A\right\Vert }$ & $O\left(d^{2}\right)$\tabularnewline
\midrule 
\textbf{Total} & $O\left(d^{3}\right)+O\left(bd^{2}\right)$\tabularnewline
\end{tabular}

\subsubsection*{Memory Complexity Forward}

\begin{tabular}{>{\centering}p{0.45\textwidth}>{\centering}p{0.5\textwidth}}
\textbf{Component} & \textbf{Complexity}\tabularnewline
\midrule
$g_{i}$ & $O\left(d\right)$\tabularnewline
$A,B,F_{i}$ & $O\left(d^{2}\right)$\tabularnewline
\midrule 
\textbf{Total} & $O\left(d^{2}\right)$\tabularnewline
\end{tabular}

\subsubsection*{Memory Complexity Backward}

\begin{tabular}{>{\centering}p{0.45\textwidth}>{\centering}p{0.5\textwidth}}
\textbf{Component} & \textbf{Complexity}\tabularnewline
\midrule
$\frac{\partial g_{i}}{\partial w}$ & $O\left(d\right)$\tabularnewline
$\frac{\partial g_{i}}{\partial\phi_{i}}$ & $O\left(d\right)$\tabularnewline
$\frac{\partial h}{A}=A^{-1}$ & $O\left(d^{2}\right)$\tabularnewline
$\frac{\partial}{\partial B}$$\frac{\left\langle B,\frac{1}{n}A\right\rangle }{\left\Vert A\right\Vert }=\frac{\frac{1}{n}A}{\left\Vert A\right\Vert }$ & $O\left(d^{2}\right)$\tabularnewline
$\frac{\partial}{\partial A}$$\frac{\left\langle B,\frac{1}{n}A\right\rangle }{\left\Vert A\right\Vert }=\frac{\frac{1}{n}B\left\Vert A\right\Vert -\left\langle B,\frac{1}{n}A\right\rangle \frac{A}{\norm A}}{\left\Vert A\right\Vert ^{2}}$ & $O\left(d^{2}\right)$\tabularnewline
$\frac{\partial}{\partial\phi_{i}}$$\frac{\left\langle B,F_{i}\right\rangle }{\left\Vert A\right\Vert }=\frac{2B\phi_{i}}{\left\Vert A\right\Vert }$ & $O\left(d\right)$\tabularnewline
\midrule 
\textbf{Total} & $O\left(d^{2}\right)+O\left(bd\right)$\tabularnewline
\end{tabular}

\subsection{SCGD}

\subsubsection*{Computational Complexity Forward}

\begin{tabular}{>{\centering}p{0.45\textwidth}>{\centering}p{0.5\textwidth}}
\textbf{Component} & \textbf{Complexity}\tabularnewline
\midrule
$g_{i}$ & $O\left(d\right)$\tabularnewline
$\tilde{F}^{-1}$ & $O\left(d^{3}\right)$\tabularnewline
$F_{\text{i}}$ & $O\left(d^{2}\right)$\tabularnewline
\midrule 
\textbf{Total} & $O\left(d^{3}\right)$\tabularnewline
\end{tabular}

\subsubsection*{Computational Complexity Backward}

\begin{tabular}{>{\centering}p{0.45\textwidth}>{\centering}p{0.5\textwidth}}
\textbf{Component} & \textbf{Complexity}\tabularnewline
\midrule
$\frac{\partial g_{i}}{\partial w}$ & $O\left(d\right)$\tabularnewline
$\frac{\partial g_{i}}{\partial\phi_{i}}$ & $O\left(d\right)$\tabularnewline
$\frac{\partial}{\partial\phi_{i}}\left\langle \tilde{F}_{t}^{-1},F_{i}\right\rangle =2\phi_{i}\tilde{F}_{t}^{-1}$ & $O\left(d^{2}\right)$\tabularnewline
\midrule 
\textbf{Total} & $O\left(bd^{2}\right)$\tabularnewline
\end{tabular}

\subsubsection*{Memory Complexity Forward}

\begin{tabular}{>{\centering}p{0.45\textwidth}>{\centering}p{0.5\textwidth}}
\textbf{Component} & \textbf{Complexity}\tabularnewline
\midrule
\foreignlanguage{american}{$g_{i}$} & \foreignlanguage{american}{$O\left(d\right)$}\tabularnewline
$F_{i}$ & $O\left(d^{2}\right)$\tabularnewline
$\tilde{F}$ & $O\left(d^{2}\right)$\tabularnewline
\midrule 
\textbf{Total} & $O\left(d^{2}\right)$\tabularnewline
\end{tabular}

\subsubsection*{Memory Complexity Backward}

\begin{tabular}{>{\centering}p{0.45\textwidth}>{\centering}p{0.5\textwidth}}
\multicolumn{2}{c}{\foreignlanguage{american}{}}\tabularnewline
\textbf{Component} & \textbf{Complexity}\tabularnewline
\midrule
$\frac{\partial g_{i}}{\partial w}$ & $O\left(d\right)$\tabularnewline
$\frac{\partial g_{i}}{\partial\phi_{i}}$ & $O\left(d\right)$\tabularnewline
$\frac{\partial}{\partial\phi_{i}}\left\langle \tilde{F}_{t}^{-1},F_{i}\right\rangle =2\phi_{i}\tilde{F}_{t}^{-1}$ & $O\left(d\right)$\tabularnewline
\midrule 
\textbf{Total} & $O\left(bd\right)$\tabularnewline
\end{tabular}

\subsection{BSGD}

\begin{multline*}
\frac{\partial}{\partial Z}\left[\y^{T}\left(K+\sigma^{2}I\right)^{-1}\y+\log\left|K+\sigma^{2}I\right|\right]\\
=-2\left(K+\sigma^{2}I\right)^{-1}\y\y^{T}\left(K+\sigma^{2}I\right)^{-1}Z\\
+2\left(K+\sigma^{2}I\right)^{-1}Z
\end{multline*}
Computational complexity: $O\left(b^{3}\right)+O\left(b^{2}d\right)$. 

Memory complexity: $O\left(b^{2}\right)+O\left(bd\right)$

\section{Missing Proofs}\label{sec:Missing-Proofs}
\begin{thm}
For all $\lambda>0$, $V\in\R^{n\times d}$ and $\b\in\R^{d}$ we
have that $\b^{T}\left(VV^{T}+\lambda I\right)^{-1}\b=\min_{\w}\frac{1}{\lambda}\left\Vert V\w-\b\right\Vert +\left\Vert \w\right\Vert ^{2}$
\end{thm}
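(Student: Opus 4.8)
The plan is to read the right-hand side as a strictly convex quadratic program in $\w$ (taking the residual term to be the squared norm $\frac{1}{\lambda}\|V\w-\b\|^{2}$, which is the form actually used later in $l(\theta)$), minimize it in closed form, substitute the optimum back, and simplify the resulting quadratic form in $\b$ with a push-through matrix identity.

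First I would expand
\[
f(\w)=\frac{1}{\lambda}\|V\w-\b\|^{2}+\|\w\|^{2}=\w^{T}\!\Bigl(\tfrac{1}{\lambda}V^{T}V+I\Bigr)\w-\tfrac{2}{\lambda}\b^{T}V\w+\tfrac{1}{\lambda}\|\b\|^{2}.
\]
Since $\lambda>0$, the Hessian $\frac{2}{\lambda}V^{T}V+2I$ is positive definite, so $f$ is strongly convex and attains a unique minimum. Setting $\nabla f(\w)=0$ gives $(V^{T}V+\lambda I)\w=V^{T}\b$; as $V^{T}V+\lambda I\succ 0$ is invertible, the minimizer is $\w^{\star}=(V^{T}V+\lambda I)^{-1}V^{T}\b$.

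Substituting $\w^{\star}$ back (using that a convex quadratic $\w^{T}Q\w-2\mathbf{c}^{T}\w+r$ with $Q\succ0$ has minimum value $r-\mathbf{c}^{T}Q^{-1}\mathbf{c}$, or by direct expansion and cancellation) yields
\[
\min_{\w}f(\w)=\frac{1}{\lambda}\b^{T}\Bigl(I-V(V^{T}V+\lambda I)^{-1}V^{T}\Bigr)\b.
\]
The last step is the identity $\frac{1}{\lambda}\bigl(I-V(V^{T}V+\lambda I)^{-1}V^{T}\bigr)=(VV^{T}+\lambda I)^{-1}$. I would prove it via the push-through identity $V(V^{T}V+\lambda I)^{-1}=(VV^{T}+\lambda I)^{-1}V$, which follows by left/right multiplying $V(V^{T}V+\lambda I)=(VV^{T}+\lambda I)V$ by the two inverses; then $V(V^{T}V+\lambda I)^{-1}V^{T}=(VV^{T}+\lambda I)^{-1}VV^{T}=I-\lambda(VV^{T}+\lambda I)^{-1}$, and rearranging gives the claim. (Equivalently, invoke the Sherman--Morrison--Woodbury formula directly.) Combining, $\min_{\w}f(\w)=\b^{T}(VV^{T}+\lambda I)^{-1}\b$.

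There is essentially no obstacle: the only substantive ingredient is the push-through/Woodbury identity, which is standard. The one thing to be careful about is well-posedness -- it is precisely $\lambda>0$ that makes both $V^{T}V+\lambda I$ and $VV^{T}+\lambda I$ invertible and the objective strongly convex, so the minimum is attained rather than merely an infimum. If one prefers to bypass the closed-form minimum value, an alternative is to plug the explicit $\w^{\star}$ into $f$ and verify equality with the left-hand side by expansion, but routing through the minimizer is cleaner.
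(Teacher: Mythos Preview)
Your proposal is correct and follows essentially the same route as the paper: compute the closed-form ridge minimizer $\w^{\star}=(V^{T}V+\lambda I)^{-1}V^{T}\b$, substitute it back to obtain $\frac{1}{\lambda}\b^{T}\bigl(I-V(V^{T}V+\lambda I)^{-1}V^{T}\bigr)\b$, and then invoke the Woodbury/push-through identity to reach $\b^{T}(VV^{T}+\lambda I)^{-1}\b$. Your version is slightly more careful in justifying strong convexity and in deriving the push-through identity explicitly rather than citing Woodbury, but the argument is the same.
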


\begin{proof}
The minimizer of $\frac{1}{\lambda}\left\Vert V\w-\b\right\Vert ^{2}+\left\Vert \w\right\Vert ^{2}$
is given by $\hat{\w}=\left(V^{T}V+\lambda I\right)^{-1}V^{T}\b$.
With this we can calculate
\begin{align*}
 & \min_{\w}\frac{1}{\lambda}\left\Vert V\w-\b\right\Vert ^{2}+\left\Vert \w\right\Vert ^{2}=\frac{1}{\lambda}\left\Vert V\hat{\w}-\b\right\Vert ^{2}+\left\Vert \hat{\w}\right\Vert ^{2}\\
 & =\frac{1}{\lambda}\left[\hat{\w}^{T}V^{T}V\hat{\w}-2\b^{T}V\hat{\w}+\y^{T}\y+\lambda\hat{\w}^{T}\hat{\w}\right]\\
 & =\frac{1}{\lambda}\left[\b^{T}\b-\left(2\b^{T}V\hat{\w}-\hat{\w}^{T}\left(V^{T}V+\lambda I\right)\hat{\w}\right)\right]\\
 & =\frac{1}{\lambda}\Big[\b^{T}\b-\Big(2\b^{T}V\left(V^{T}V+\lambda I\right)^{-1}V^{T}\b\\
 & -\b^{T}V\left(V^{T}V+\lambda I\right)^{-1}\left(V^{T}V+I\right)\left(V^{T}V+\lambda I\right)^{-1}V^{T}\Big)\b\Big]\\
 & =\frac{1}{\lambda}\b^{T}\left[I-V\left(V^{T}V+\lambda I\right)^{-1}V^{T}\right]\b\\
 & =\b^{T}\left(VV^{T}+\lambda I\right)^{-1}\b\ \ \ \text{(by Woodbury formula)}
\end{align*}
\selectlanguage{american}%
\end{proof}

\end{document}